\DeclarePairedDelimiterX{\inp}[2]{\langle}{\rangle}{#1, #2} 
\newtheorem{prop}{Proposition}[section]
\begin{document}
\pagestyle{headings}
\mainmatter
\def\ECCVSubNumber{11}  

\title{Flood Inflow Forecast Using $\ell_2$-norm Ensemble Weighting Sea Surface Feature
} 

\titlerunning{Flood Inflow Forecast $\ell_2$-norm Ensemble Weighting Sea Surface Feature}
\authorrunning{T. Yasuno, M. Amakata, J. Fujii, M. Okano, R. Ogata} 
\author{Takato Yasuno\inst{1}\orcidID{0000-0002-4796-518X} \and
Masazumi Amakata\inst{1} \and
Junichiro Fujii\inst{1} \and
Masahiro Okano\inst{1} \and
Riku Ogata\inst{1}}
%
%
\institute{Yachiyo Engineering, Co.,Ltd. 5-20-8 Asakusabashi, Koto-ku, Tokyo, Japan 
\email{tk-yasuno@yachiyo-eng.co.jp}\\
}
\maketitle

\begin{abstract}
  It is important to forecast dam inflow for flood damage mitigation. The hydrograph provides critical information such as the start time, peak level, and volume. Particularly, dam management requires a 6-h lead time of the dam inflow forecast based on a future hydrograph. The authors propose novel target inflow weights to create an ocean feature vector extracted from the analysed images of the sea surface. We extracted 4,096 elements of the dimension vector in the fc6 layer of the pre-trained VGG16 network. Subsequently, we reduced it to three dimensions of t-SNE. Furthermore, we created the principal component of the “sea temperature weights” using PCA. We found that these weights contribute to the stability of predictor importance by numerical experiments. As base regression models, we calibrate the least squares with kernel expansion, the quantile random forest minimized out-of-bag error, and the support vector regression with a polynomial kernel. When we compute the predictor importance, we visualize the stability of each variable importance introduced by our proposed weights, compared with other results without weights. We apply our method to a dam at Kanto region in Japan and focus on the trained term from 2007 to 2018, with a limited flood term from June to October. We test the accuracy over the 2019 flood term. Finally, we present the applied results and further statistical learning for unknown flood forecast.
\end{abstract}

\section{Introduction}
\subsection{Flood Dam Inflow Forecast for Heavy and Anomalous Rainfall}
For the past decade, flood damage has become a social problem owing to unexperienced weather conditions arising from climate change. An immediate response to extreme rain and flood situations is important for the mitigation of casualties and economic losses and for faster recovery. 
Recently, a record of heavy rain was from 1 h to 72 h during the period of "heavy rain in July, 2nd year of Reiwa" (2020, July 3 to 31). In addition to Kyushu, Gifu, and Nagano prefectures have set new records for unexperienced highest rainfall.
Regarding large-scale atmospheric flow and marine features, the westerlies (subtropical jet stream) near Japan have continued to meander since 2020 early July.
Owing to the increased westerly meandering over the western Eurasian continent (Silk Road teleconnection), the inflow of water vapor to the west coast of Kyushu was extremely large, and "linear precipitation zones" were concentrated in Kyushu, making a significant contribution to the total precipitation.
Early morning on July 4th, heavy rains occurred mainly in the Amakusa region and Kuma region of the Kumamoto prefecture. There was remarkably heavy rainfall of over 200 mm in the area with rainfall for 3 h.
A linear precipitation zone occurred from August 12 to 14 in the northern part of Kyushu and the Chugoku region, and "information on remarkably heavy rain" was announced nine times. Particularly, from the early morning to the dawn of August 14, extremely heavy rain continued owing to the linear precipitation zone in the northern part of Kyushu.

The use of precipitation forecast outputs may influence the accuracy of dam inflow prediction for flood damage mitigation. Particularly, forecasting 6 h ahead is critical and legally requested for the basic policy strengthening flood control function of an existing dam (Ministry of Land Infrastructure, 2019). To prevent social losses due to incoming heavy rain and typhoons, we attempt to predict flood situations at least 6 h beforehand. The dam manager can inform downstream residents of the hazardous flood scenario through an announcement. The elderly and children can then escape to safety facilities. Therefore, forecasting precipitation 6 h ahead of time is crucial for mitigating flood damage and ensuring the safety of people downstream. 

\subsection{Related works}
\subsubsection{Statistical Postprocessing Ensemble Forecast}
For the past two decades, there have been numerous challenges in the formulation of a methodology for the statistical postprocessing of weather forecasts using available data sources (Wilks and Hemri,~2018). Vannitsem et al.(2020) overview the probabilistic postprocessing methods to classify parametric and non-parametric approaches, where it is easy to understand each position on the two axes of implementability and flexibility. The first axis indicates whether it is easy or difficult to implement the tuning parameter choice, available software, and model size. The second axis characterizes the model adaptability to different outputs, and the complexity of representable relationships between the input predictors and target output (Vannitsem et al.,~2020). 
The origin of the parametric method is the Ensemble Model Output Statistics (EMOS) (Gneiting et al.,~2005) using the Gaussian regression model. For higher applicability, there are many state-of-the-art methods such as the Bayesian Model Averaging (BMA) (Raftery et al.,~2005), Nonhomogenous boosting predictor selected EMOS (Messner et al.,~2017), D-vin Copula incorporating dependence structure (Moller et al.,~2018), Bivariate Gaussian models in a distributional regression (Lang et al.,~2019), and distributional regression forest (Schlosser et al.,~2019).

However, non-parametric methods have a widespread interest owing to their flexibility and high-performance. For example, the Quantile regression (Bremness et al.~2004), Analog ensemble (Delle Monache et al.,~2013), Neural network-based quantile regression (Bremness et al.,~2020), Member-by-member (Van Schaeybroeck et al.,~2015), and Quantile regression forest (QRF) (Taillardat et al.,~2016). 
The authors build a practical algorithmic model inspired by non-parametric approaches. We then propose a powerful ensemble forecast method with both higher flexibility and easier implementability for 6-h lead time dam inflow guidance.   

\subsubsection{Sea Surface Temperature Feature}
Sea surface temperature (SST) is a fundamental physical variable for understanding,
Quantifying, and predicting complex interactions between the ocean and atmosphere (O'Carroll et al., 2019). 
Numerical weather prediction (NWP) uses current conditions as input into mathematical models of the atmosphere to predict the weather. The SST affects the behaviour of the overlying atmosphere and vice versa; thus, daily analyses of the SST are required by operational NWP systems (Beggs, 2010). Two-way air–sea coupled weather prediction models have been developed, such as the ECMWF Integrated Forecasting System (IFS)10 (Williams et al., 2015). Operational centres also issue seasonal forecasts from several weeks to months (Balmaseda et al., 2009). Most seasonal forecasting systems are based on coupled ocean-atmosphere circulation models that predict the SST.

The Japan Meteorological Agency (JMA) has been operating a global ocean data assimilation system since 1995 to monitor El Niño and Southern Oscillation (ENSO) conditions. The latest system (Tsujino et al., 2010) is the MOVE (Multivariate Ocean Variational Estimation) / Meteorological Research Institute Community Ocean Model - Global version 2. The output, along with the atmospheric analysis, is also used as an initial condition of a coupled ocean-atmosphere model for the ENSO prediction and seasonal forecast. The sea surface forcing is based on 6-hourly data from the Japanese 55-year Re-Analysis (Kobayashi et al., 2015).
There are many historical observed SST data and global scale 20 km resolution of statistical modelling ocean-atmosphere for numerical forecasts. However, meso scale 5 km resolution of dam inflow forecast incorporating the SST feature is unavailable to the best of our knowledge.      

\subsubsection{Spatially Local Weighted Regression}
Starting the locally weighted regression (Cleveland, et al., 1988), we can implement the generalized least squares method to set a diagonal matrix gathering non-negative weights.
Here, the weights are assigned by a kernel function that contains a local bandwidth parameter controlling the size of the neighbourhood. 
Modelling the conversion from wind to power is a nonlinear regression problem, possibly with time-varying parameters (Pinson et al., 2018). An example has been illustrated as the outcome of fitting local polynomial regression models for the conversion of wind speed to power generation. Unlike the data modelling approach, the weighted target variable has been widely used on the algorithmic modelling approach. For the past two decades, many studies have focused on the relationship between the vegetation feature and atmospheric precipitation. To model the spatial climate-vegetation relationship (Brunsdon, et al, 1998), statisticians should consider the phenomenon of non-stationarity across land cover or vegetation type (Foody, 2003; Propastin et al., 2008). 

These studies are based on the weighted least squares regression method that is restricted data modelling requiring some assumptions. The parametric assumption makes their implementation easier than in non-parametric models. However, the state-of-the-art weighted least squares method could not represent any hydrograph for future dam inflow. This is because it is not a trend curve but a complex triangle shaped with skewness and kurtosis.   
Furthermore, there is no practical method incorporating ocean feature weights of critical boundary condition from sea surface temperature analysed images for dam inflow forecasts.   

\section{Method}
We propose a novel weights of target inflow that can create the sea temperature feature vector extracted from SST analysed images.
Furthermore, we calibrate three base regression models for $\ell_2$-normalized ensemble forecasting dam inflow.

\subsection{Ensemble Regression with Sea Surface Feature}
We calibrate typical regression models based on surrogated trees and the polynomial kernel function. We focused on the Random Forest (RF) regression minimized out-of-bag (OOB) error, and the Support Vector Regression with the polynomial kernel.
We apply these regression models to two forecast problem with a 6-h lead time of the basin rainfall and dam inflow.  
 When we compute the OOB predictor importance, we can visualize the stability of each variable importance introduced with our proposed weights, compared with other results without the weights.
\subsubsection{Base Regressions with SST-Weights}
Statistical modelling is subdivided into two cultures by Breiman (2001). The data modelling culture begins by assuming a stochastic data model that contains the probability distribution with parameters and random noise. The values of the parameters are estimated from the data. The former model validation is yes-no using goodness-of-fit tests and residual examinations. For example, linear and logistic regression. 
In contrast, the algorithmic modelling culture considers the inside of the box complex and the unknown without any stochastic assumption. This approach finds a function : an algorithm that operates on the input to predict the target variable. The later mode validation is measured by predictive accuracy. For example, decision trees, neural nets, random forest, and support vector machine. 

This study applies our proposed ocean weights to three statistical regression models including the least squares model with kernel expansion (Kernel Reg), the quantile regression forest minimizing out-of-bag error (RFoob), and the support vector regression with linear/polynomial kernels (SVMpoly).
We calibrate three weighted base regressions using Bayes optimization with respect to their hyperparameters. Kernel Regression (Rahimi, A., et al., 2008; Le, Q., et al., 2013; Huang, P. S., et al., 2014) has hyperparameters that consist of the learner(least squares, svm), kernel scale, lambda, number of expansion dimensions, and epsilon.
RFoob (L.~Breiman, 2001;~N.~Meinshausen, 2006;~T.~Hastie et al., 2009) has hyperparameters that consist of the minimum leaf size and number of tree size. The number of grown trees is possible to set to 100 for the accuracy and running speed. 
SVMpoly(B.~Scholkopf, et al., 2000;~Fan, R.-E.,et al.,~2005;~Kecman V., T.,et al.,~2005) has hyperparameters that include the box constraint, epsilon, kernel function(linear, polynomial), and polynomial order (useful range from 2.00 to 2.97). 

\subsubsection{$\ell_2$-normalized Ensemble Forecast}
Ensemble learning can be broken down into two tasks: developing a population of base learners from the training data, and then combining them to the composite predictor (T.~Hastie,et al.,~2009). Previously, we presented the statistical modelling of a population of base learners such as Kernel Reg, RFoob, and SVMpoly. Additionally, we formulated their combination to achieve a higher performance predictor, unlike averaging. In the case of the dam inflow and basin rainfall forecast, the equalized weights, one third averaging as a stacking ensemble was not effective as far as we implemented numerical experiments. Hydrological observed data have a characteristic where the boundary condition suddenly changes to the non-stationary phase and observed meteorological data as an initial condition contains uncertain signals and noise accumulated to chaotic phenomena. The variance reduction of base model outputs is necessary to build a powerful ensemble model beyond simple averaging. Based on the work by Hwang et al.(2019), the temperature and rainfall forecast problem was successfully improved using $\ell_2$-normalized ensemble modelling.
This study proposes an $\ell_2$-normalized ensemble method where three base regressors outputs are combined with each output's $\ell_2$-norm denominator, respectively.  
The three base regression prediction outputs are represented as $\hat{y}^{Kernel}$,~$\hat{y}^{RFoob}$,~$\hat{y}^{SVM}$. $N$ is the number of test input data at the flood term. The $\ell_2$-normalized target output is represented as follows.
\begin{equation}
||\hat{y}||_2 = \sqrt{{\hat{y}^T} {\hat{y}} } = \sqrt{\sum_{t=1}^N (\hat{y}_t)^2}
\end{equation}
The $\ell_2$-normalized ensemble forecast is formulated in the next equation. Here, the three term coefficients should satisfy $\sum_{m=1}^3 a_m =1$,~$a_m \geq 0$.
\begin{equation}
\hat{y}^{\ell_2{-normalized Forecast}} = 
a_1 \frac{\hat{y}^{Kernel}}{||\hat{y}^{Kernel}||_2/N} \nonumber 
\end{equation}
\begin{equation}
 + a_2 \frac{\hat{y}^{RFoob}}{||\hat{y}^{RFoob}||_2/N}
 + a_3 \frac{\hat{y}^{SVM}}{||\hat{y}^{SVM}||_2/N}
\end{equation}
For example, when we computed the 6-h lead time dam inflow forecast ocean weighted base regressions, we obtained each value of $\ell_2$-norm$/N$ as 0.3120, 0.7522, 0.7900. It is possible to use three coefficients $a_1=a_2=a_3=\frac{1}{3}$.   

When the previous $\ell_2$-normalized ensemble method is applied to the entire flood term of the test data, there may be a bias gap between the ensemble forecasts and the test actual value. This is because some flood terms occur at an extreme peak level, whereas others occur at a low peak level. In summary, the variance of each flood term is not always steady but a complex shape that features different peak levels, high and low multiple peaks, skewness and kurtosis, and inflow volume represents the area of the hydrograph. Therefore, the dam inflow ensemble forecast must independently normalize each flood term, respectively. The authors propose a novel batch term of the $\ell_2$-normalized ensemble as the next algorithm. 
\begin{table}[h]
\begin{center}
\begin{tabular}{l}
\hline 
\textbf{Algorithm1}: Batch-term $\ell_2$-normalized Ensemble \\
\hline 
\textbf{input} $B$~flood terms~$\{ y_{b,t} \}_{t=1}^{N_{b}} (b=1,...,B)$,\\
~$\sum_{b=1}^{B} N_{b}=N$, \\
~$\sum_{mdl=1}^{3} \alpha_{mdl}^{b}=1,$~$\alpha_{mdl}^{b}\geq 0$, \\
~$||\hat{y}_b^{Kernel}||_2 \neq ||\hat{y}_b^{RFoob}||_2 \neq ||\hat{y}_b^{SVM}||_2 $, \\
$\hat{y}_b^{B \ell_2 {-} Ensemble} \leftarrow \emptyset_{N_b}$, \\
\textbf{for} $mdl \in \{1,2,3 \} \equiv \{ Kernel, RFoob, SVM \}$ \\
~~$\ell_2 MOS_{mdl}^b \leftarrow \alpha_{mdl}^b \frac{\hat{y}_b^{mdl}}{||\hat{y}_b^{mdl}||_2/N_b} $ \\
~~$\hat{y}_b^{B \ell_2 Ensemble} \leftarrow 
      \hat{y}_b^{B \ell_2 Ensemble} + \ell_2 MOS_{mdl}^b$ \\
 \textbf{end} \\
\textbf{output} $\hat{y}_b^{B \ell_2 Ensemble}$ \\
\hline
\end{tabular}
\end{center}
\end{table}
Here, the input of the test data contains $B$-flood terms. Additionally, $\ell_2 MOS_{mdl}^b$ represents the $\ell_2$-normalized model output statistics (MOS) predicted by the base regression model $mdl\in $ $\{ Kernel, RFoob, SVM \}$ on the flood term $b$. Therefore, $\hat{y}_b^{B \ell_2 Ensemble}$ is the ensemble forecast output combined based on three $\ell_2$-normalized model output statistics on the flood term.   

We confirmed that if only when the loss function of ensemble square error is convex, then the accuracy of $\ell_2$-normalized ensemble forecast is strictly greater than the weighted average of the individual base learners accuracy. See the proof of proposition as shown in the supplementary materials. Although the loss function is not always convex, but    $\ell_2$-normalized ensemble forecast has been practically useful as shown in section 3.    

\begin{figure}[h]
\begin{center}
\includegraphics [width = 80mm] {./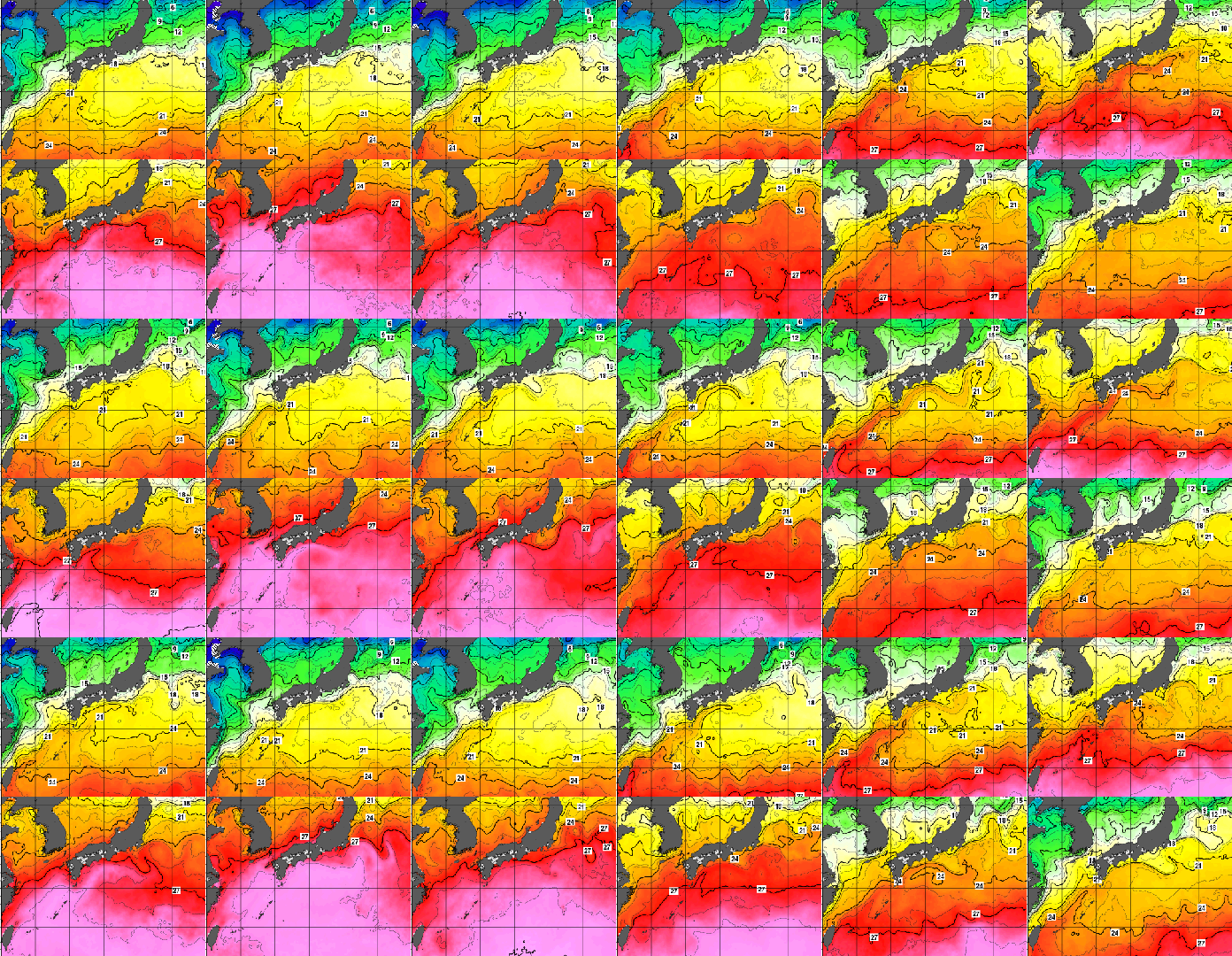}
\end{center}
\caption{Analysed Images of Monthly Average Sea Surface Temperature(from top of left image to right direction, a series of 36 months) }
\label{fig-1}
\end{figure}

\subsection{Weighting Sea Surface Temperature Feature}
We propose the novel weights of target inflow that can create the sea temperature feature vector extracted from SST analysed images. Particularly, we extracted 4,096 dimension vector elements in the fc6 layer of the pre-trained VGG16 network and reduced it to three dimensions of t-SNE. Further, we create the principal component of the “sea temperature weights” using PCA. We found that the weights contributed to the stability of the predictor importance. 

\begin{table}[h]
\begin{center}
\begin{tabular}{l}
\hline 
\textbf{Algorithm2}: Sea Surface Temperature Weights \\
\hline 
\textbf{input} ~$I^{src}_m(m=1,...,M)$ \\
~Trim images $I^{src}_m$ into $I_m$\\
~Select network $Clustering flood$-$month(net^{*})$ \\
~~$net \in \{$ {ResNet101}, {Inception-v3}, {VGG16} $\}$\\
~Feature extract $I_m$ using $net^{*}$ \\ 
~~→feature vectors $z_m=(z_{m1},...,z_{mK})$ \\
~Dimension reduction $z_m$ using {t-SNE} \\
~~→ embedding $v_m=(v_{m1},v_{m2},v_{m3})$ \\ 
~Principal component analysis $v_m$ 90\% variance \\
~~→first component $W_m(m=1,...,M)$ \\
~Standardize $W^{std}_m$ ← $\frac{(W_m - \min W)}{(\max W - \min W)}$ \\
~~here,$\max W = \max\{W_m \}$, $\min W = \min\{W_m \}$ \\
\textbf{output} ~$STW_m$ ← $W^{std}_m + \epsilon$ ~here, $\epsilon=10^{-8}.$ \\
\hline
\end{tabular}
\end{center}
\end{table}

\begin{figure}[h]
\begin{center}
\includegraphics [width =70mm] {./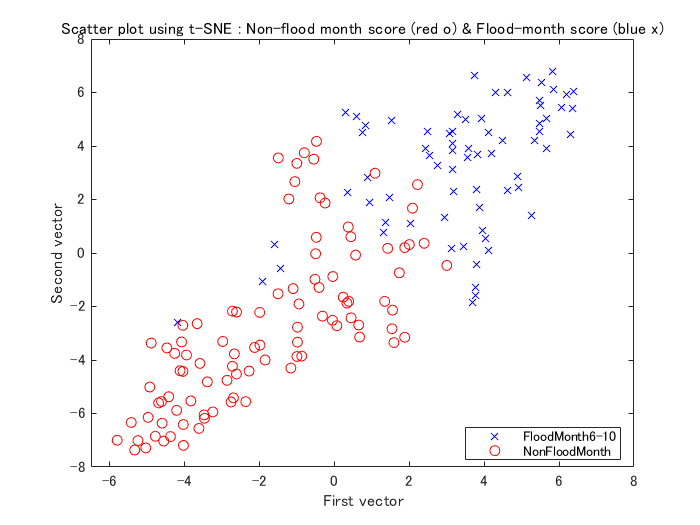}
\end{center}
\caption{Plot {t-SNE} via Inception-v3 from the 312th {avg-pool} layer with 2,048 elements}\label{fig-4}
\begin{center}
\includegraphics [width = 70mm] {./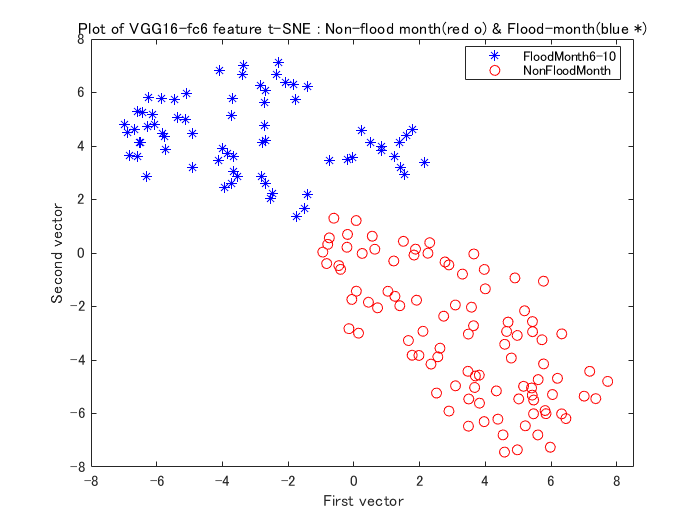}
\end{center}
\caption{Plot {t-SNE} Using VGG16 from the 33th fc6 layer with 4,096 elements}\label{fig-6}
\end{figure}

We propose a novel regression weights approach to balance all predictor importance without decreasing the importance to stable training. The results are presented in Section 3, Applied Results. Using the 10 steps, we reduce the dimensions of an SST analysed image into one dimension for the regression weight to monitor the monthly ocean characteristics. 

1: Collect $I^{src}_m(m=1,...,M):$ sea temperature analysed images per monthly average, from Atmospheric Agency. 
2: Focus on the local sea region where it is possible to identify the dam basin based on rainfall occurrence from meteorological knowledge and experience.
3: Trim $I^{src}_m$ into $I_m$ using image processing.
4: Select pre-trained deep network from ResNet101, Inception-v3, InceptionResNet-v2, VGG16, to recognize weather flood or non-flood terms.
5: Feature extract $I_m$ using the best pre-trained network into feature vectors $z_m=(z_{m1},...,z_{mK})$, e.g. 4,096 elements under VGG16.   
6: Reduce dimensions of $K$-elements feature vectors $z_m$ to embedding vector $v_m=(v_{m1},v_{m2},v_{m3})$ using the t-distributed stochastic neighbour embedding ({t-SNE}, Laurens, 2008), efficient computing Burnes-Hut algorithm(Laurens, 2013).
7: Principal component analysis with 90 percent variance applied in embedding vector $v_m$ to sea temperature weights $W_m(m=1,...,M)$. If it has multiple components, then we can use the first principal component.
8: Min-max standardize, $W^{std}_m = (W_m - minW)/(maxW - minW)$ to avoid negative values because regression weights must be non-negative. Here, $maxW = \max\{W_1,...,W_M \}$ and $minW = \min\{W_1,...,W_M \}$. 
9: Avoid zero weights to add non-zero very small value $STW_m = W^{std}_m + \epsilon$. Here, we can set~$\epsilon=10^{-8}$. 
10: Transform monthly sea temperature weights to hourly date-time expanded weights on flood term range. 

\section{Applied Results}
We apply our method in a dam at Kanto region in Japan, where we focus on the trained term from 2007 to 2018, in the limited flood term June to October. We test the accuracy over the 2019 flood term. 

\subsection{Creation of Sea Surface Feature Weights}
Figure~\ref{fig-6} depicts the plot {t-SNE} of the sea temperature feature extracted by VGG16 from the 33rd fc6 layer with 4,096 elements. We select the VGG16 as the pre-trained network for ocean feature extraction. 
It should be noted that the previous result using the Inception-v3 shown in figure~\ref{fig-4} could not clearly cluster the flood term (Jun to Oct) or non-flood term (Jan to May, Nov, Dec). Particularly, on the centre of the plot, the positions of two terms are very close, thus, the boundary is unclear. 
Similarly, the ResNet101 was partially blending, thus, the subgroup was unclear. Therefore, we used the VGG16 as a superior pre-trained net for the sea temperature feature engineering.       

\subsection{Target and Predictor Data}
Below, $t$ represents the unit interval by one hour, whose period is common to the target and available predictors dataset in this study. We focus on a dam on the flood term from June to October every year. The training term includes 12 years, 2007 to 2018. However, we set the test year 2019 with the flood term June to October, where the 19th typhoon influenced the flood damage of the dam basin. Here, we set the 8-h order for the transformation of auto-regression and moving average using the time series analysis. See the supplementary matearial of our set of hydrological dataset and transformed predictors in detail.    

\subsection{Training Results and Test Accuracy}
Table 4 presents the training results and accuracy comparing the never weighted regressions and weighted regressions. Here, FCD indicates a flood term coefficient of the determinant, evaluated over the flood term. The FCD index makes it possible to fairly compare different flood terms for accuracy, and ranges from zero to one. 
Additionally, RMSE stands for the root mean square error, and MAE is the mean absolute error. These two indices can relatively compare the accuracy under the same flood term. Each flood term has different scales of the dam inflow or basin rainfall, causing an error scale of the forecast outputs.  

Table~\ref{table-4} presents the accuracy comparison of the ocean feature weighted regressions and $\ell_2$-normalized ensemble forecast for the 6-h lead time dam inflow. Our proposed sea temperature feature weighted regression almost outperforms the never weighted model at both the base learner and $\ell_2$-normalized ensemble forecast.    
Here, the common $\ell_2$-norm is applied in all the test flood terms. We confirm whether the batch term $\ell_2$-normalized ensemble can improve the accuracy using our proposed algorithm 2. 
Table~\ref{table-5} is well-informed about a computed $\ell_2$-norm at 5-flood term for ensemble forecasts, here $b$ stands for a batch of flood term. Note that the first and second flood term has low value of $\ell_2$ norm, so the peak levels of dam inflow are relatively small. In contrast, fourth and fifth flood term has larger value of $\ell_2$ norm, so the peak levels of dam inflow are extremely high. Thus, base model's ($mdl$) coeficient at flood batch $\alpha^b_{mdl}$ are not able to equalize them. We implemented an ensemble forecast that named $\ell_2$-normalized ensemble using median$+\sigma$, each base model has common coeficient that consists the median and standard deviation of 5-flood batch's $\ell_2$-norm. This accuarcy outperformed rather than the ensemble forecast of equalized coeficients with one third. 
Furthermore, our final ensemble forecast that named $+$flood batch of $\ell_2$-norm was implemented using the most detailed base model's ($mdl$) coeficient at flood batch $\alpha^b_{mdl}$. The final accuarcy of ensemble forecast outperformed rather than the ensemble forecast of median standard deviation coeficients.        

\begin{table}[h]
\caption{Experimental Results of $\ell_2$-norm at 5-flood term for Ensemble Forecasts ($b$ stands for flood term)} \label{table-5}
\begin{center}
\begin{tabular}{l | r r r r r}
\textbf{MODEL} &\textbf{b=1} &\textbf{b=2}  &\textbf{b=3}  &\textbf{b=4} &\textbf{b=5} \\
\hline 
Kernel Reg.  & 1.01& 0.86& 1.22& 3.23& 9.72 \\
\hline 
RForest oob.& 1.93& 1.24& 3.17& 8.15& 25.18\\
\hline 
SVM poly    & 1.09& 0.79& 1.72& 3.82& 28.25\\
\hline 
\end{tabular}
\end{center}
\end{table}

\begin{table}[h]
\caption{Accuracy of Ocean Weighted Regressions and the Outputs $\ell_2$-normalized Ensemble Forecast for 6-h leadtime Dam Inflow} \label{table-4}
\begin{center}
\begin{tabular}{l c | r r r}
\textbf{MODEL} &\textbf{WEIGHT} &\textbf{FCD}  &\textbf{RMSE}  &\textbf{MAE} \\
\hline 
Kernel   & never  & 0.3702  & 120.97  & 33.04  \\
Reg.      & \textbf{Ws on} & \textbf{0.4865}  & \textbf{109.23}  & \textbf{30.27}  \\
\hline 
RForest  & never  &  0.7895  & 69.93   & 20.99  \\
oob.        & Ws on &  0.7789  & 71.67   & 21.48  \\
\hline 
SVM     & never  &  0.8900  &  50.55   & 18.47  \\
poly.      & \textbf{Ws on} &  \textbf{0.9049}  &  \textbf{47.01}  & \textbf{18.26} \\
\hline 
$\ell_2$-norm & \textbf{never}  &  \textbf{0.8930}  &  \textbf{48.85}   & \textbf{18.42}  \\
Ensemble& Ws on &  0.8837  &  51.97  & 18.33 \\
\hline 
$\ell_2$-norm & \textbf{never}  &  \textbf{0.9015}  &  \textbf{47.84}   & \textbf{16.43}  \\
median$+\sigma$ & Ws on &  0.8876  &  51.09  & 17.77 \\
\hline 
$+$flood $b$ & \textbf{never}  &  \textbf{0.9131}  &  \textbf{44.92}   & \textbf{15.22}  \\
$\ell_2$-norm& \textbf{Ws on} &  \textbf{0.9035}  &  \textbf{47.36}  & \textbf{16.29} \\
\hline 
all        & never  &  0.7929  & 63.84    & 20.42  \\
           & \textbf{Ws on} &  \textbf{0.8075}  &  \textbf{63.05}  & \textbf{20.40} \\
\hline 
\end{tabular}
\end{center}
\end{table}

\begin{figure}[h]
\begin{center}
\includegraphics [width = 100mm] {./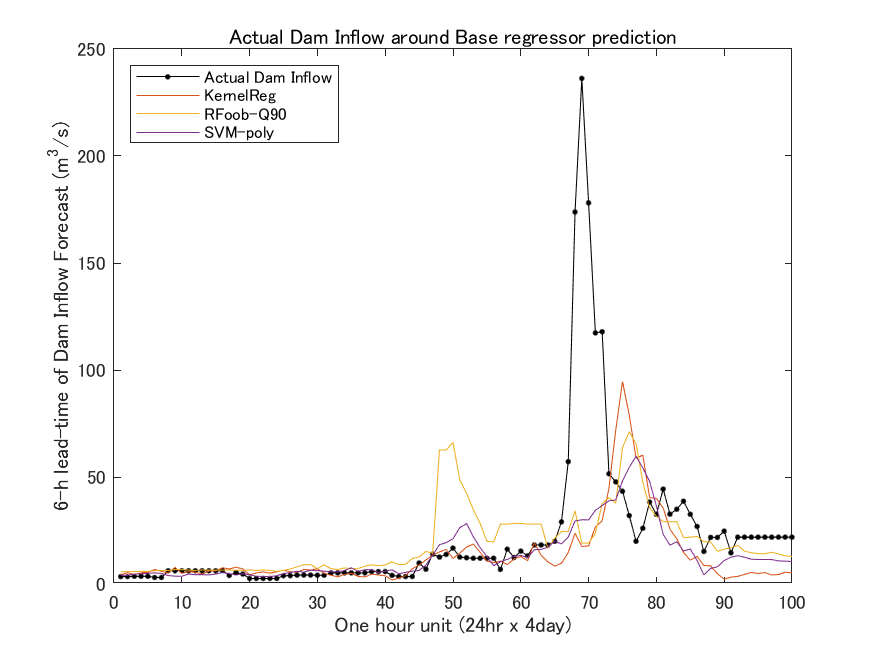}
\end{center}
\caption{Dam Inflow 6-h Forecasts and Actual Observation -Multiple Peak Forecast: peak level 230 ton/s in August 2019}\label{fig-13}
\begin{center}
\includegraphics [width =100mm] {./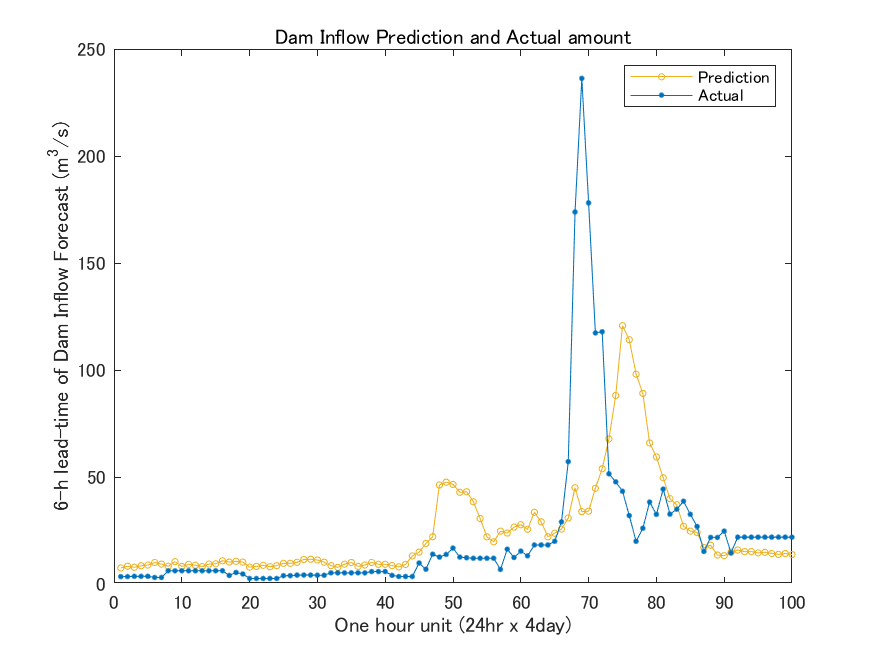}
\end{center}
\caption{Ensemble Dam Inflow Forecast and Actual Observation -Multiple Peak Forecast: August 2019}\label{fig-18}
\end{figure}

\subsection{Base Regression Model Outputs}
Figure~\ref{fig-13} shows a heavy rain's flood of the dam inflow 6-h lead time forecast and actual observation with the peak level of 230 ton/s in August 2019. The base learner's prediction output delays than the actual peak time. The RFoob is over-fitted around the date-time 50. 
Figure~\ref{fig-15} shows an anomalous and enexperienced flood inflow outside range of the training data, the dam inflow 6-h lead time forecast and actual observation with the peak level of 1,750 ton/s in October 2019. The SVM base learner's prediction output makes it possible to almost match the peak time, effectively approximating the shape of the actual hydrograph. 

\subsection{Model Outputs $\ell_2$-Normalized Ensemble Forecasts} 
Figure~\ref{fig-18} shows the $\ell_2$-normalized ensemble dam inflow forecast and actual observation at a heavy rain's flood inflow. This ensemble forecast automatically selects the best learner's output, combined with the $\ell_2$-normalized denominator to reduce the base learner's variance. 
Figure~\ref{fig-20} shows the $\ell_2$-normalized ensemble dam inflow forecast and actual observation at an anomalous dam inflow outside range of the training data. The ensemble forecast is automated to combine three base learner's outputs using the $\ell_2$-normalized denominator to reduce the base learner's variance, successfully. 

\begin{figure}[h]
\begin{center}
\includegraphics [width = 100mm] {./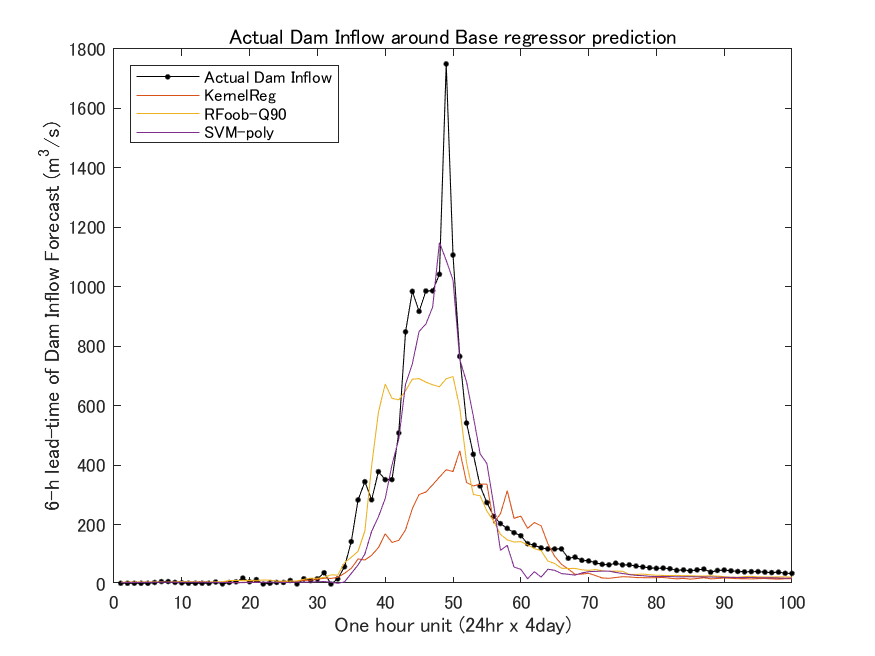}
\end{center}
\caption{Dam Inflow 6-h Forecasts and Actual Observation -Anomalous Flood Forecast: peak level 1,750 ton/s in October 2019}\label{fig-15}
\begin{center}
\includegraphics [width = 100mm] {./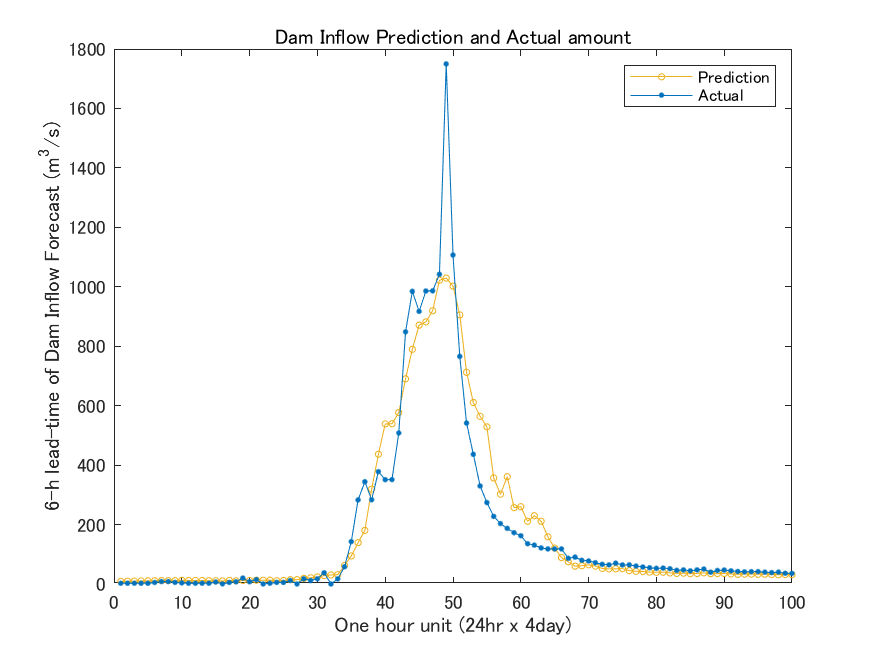}
\end{center}
\caption{Ensemble Dam Inflow Forecast and Actual Observation -Anomalous Flood Forecast: October 2019}\label{fig-20}
\end{figure}

\subsection{Limitation to Complex and Anomalous Flood Forecasts} 
Our proposed method has some restriction of data mining and fine tuning ensemble model. 
Flood data mining is possible to collect uncertain number of floods every year. And basin rainfall observations are chaotic nature that contains measurement error and unknown noise. The predictive accuracy is not always perfect, for example, there is a case that a computed ensemble forecast is over-fitted at the peak prediction, and that another forecast has start time of gap between a forecast time and the actual one. Unlike a single target flood inflow modeling, we think that a key is the dependence structure with another correlated target variable such as basin rainfall. The rainfall feature is complex and chaotic property, for example, there are multiple peak with different levels, and long time rainfall could occurre at a same region that brought an unexperienced scale of dam inflow.     
We continue to collect signal included data and statistically learn ensemble forecast model. 

\subsection{Predictor Importance Weighting Sea Surface Temperature}
As shown in Figure~\ref{fig-7}, the vertical axis indicates the predictor importance comparison without and with the sea temperature weights computed on the random forest regression minimizing the out-of-bag. The horizontal axis represents all the predictors that are grouped in the case of the never weighted regression. The predictor importance of the never weighted regression gradually decreases to close to the zero value. In contrast, the predictor importance of the sea temperature feature weighted regression maintains a higher value than the importance value of the never weighted regression. This means that the higher value and balance of the predictor importance facilitates stable training.    
See the supplementary material that highlights the predictor importance of the input data and transformed feature. 

\section{Concluding Remarks}
Finally, the following conclusions are drawn from our applied results and further apps improvement.

\subsection{Stable Weighted Regressions and Outperformed Ensemble}
Our proposed weighted regression method using the ocean feature contributed to the stability of the predictor importance index and test accuracy applied on the flood term of  dam inflow forecasts.  
To extract the analysed images of the sea temperature, the VGG16 network with the fc6 layer is superior to other pre-trained networks.    
We found that our ocean weighted regression method contributed to the stability of the predictor importance for more effective calibration. Furthermore, the model output $\ell_2$-normalized ensemble outperformed each single weighted regression by numerical experiments and the proof of proposition, see the supplementary material.

\subsection{Future Works for Statistical Learning Unknown Flood}
This study addressed the forecast problem of the variability of dam inflow using 6-h lead time to forecast future occurrence of extreme rainfall with uncertain start times and peak levels. 
Single target forecast has limited capability for the representation of extreme dam inflow and the interpretation of multiple peaks with the time lag in extreme rainfall. Further, we address the multiple target forecast problem incorporating the dependence of basin rainfall and dam inflow for better transparency of meteorological and hydrological phenomena.   

\subsubsection*{Acknowledgements}
We thank T. Fukumoto and S. Kuramoto (MathWorks) for their support to set the pace using the MATLAB.

\clearpage 

\begin{figure}[h]
\begin{center}
\includegraphics [width = 120mm] {./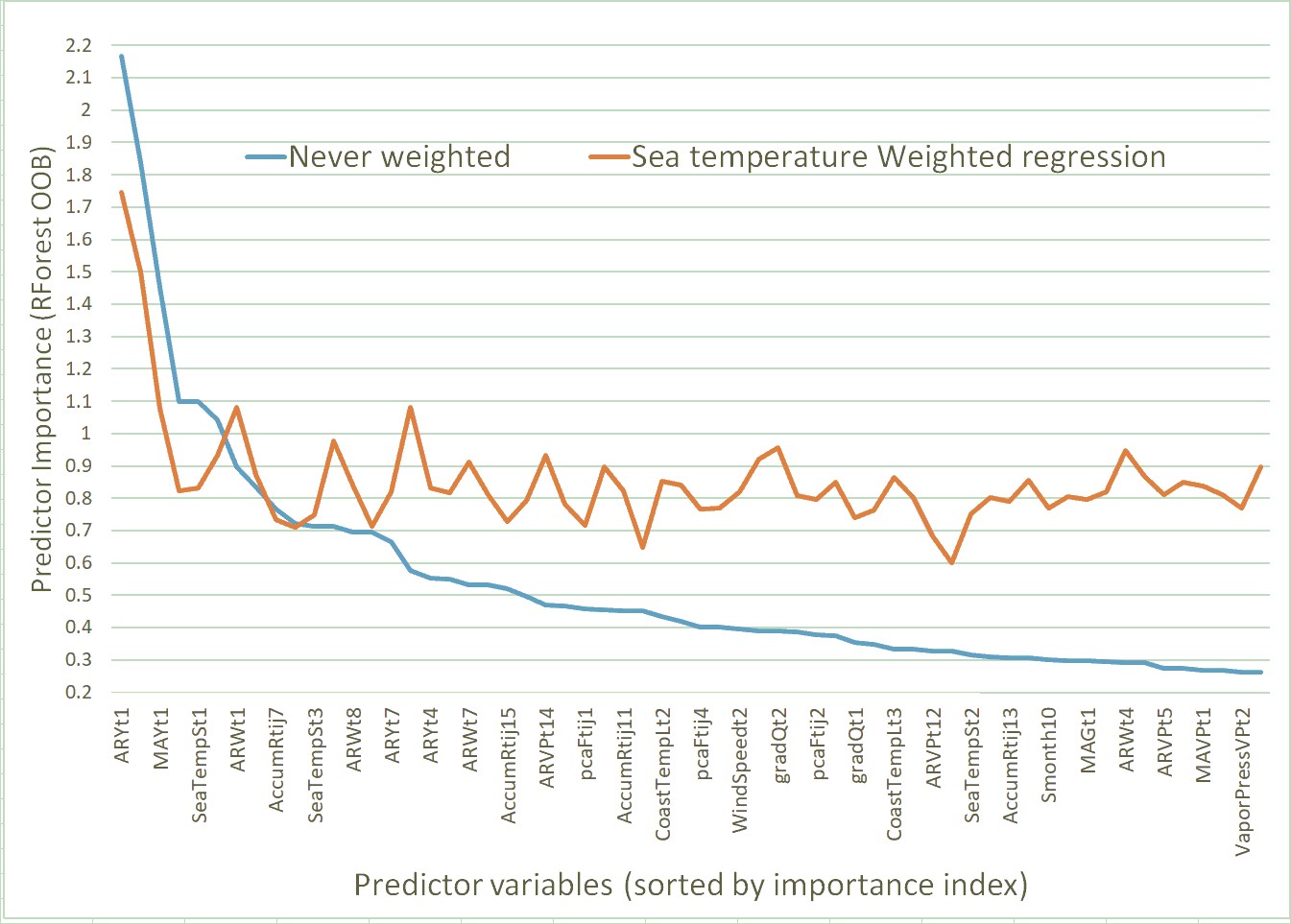}
\end{center}
\caption{Predictor Importance Comparison Without/With Sea Surface Temperature Weights computed on Random Forest out-of-bag regression}\label{fig-7}
\end{figure}

\section*{Reference}

\bibliographystyle{splncs04}
\bibliography{egbib}

[1] Balmaseda, M. A., Alves, O. J., Arribas, A., et al. (2009).
    Ocean Initialization for Seasonal Forecasts..
   \textit{Oceanography 22.}: 154--159.

[2] Beggs, H. (2010).
    Use of TIR from Space in Operational Systems.
   \textit{Oceanography from Space Revisited.}: edited by V. Barale, J. F. R. Gower.

[3] Breiman, L. (2001).
     Statistical Modeling: The Two Cultures.
    \textit{Statistical Science Vol.16, No.3}: 199--231.

[4] Breiman, L. (2001).
     Random Forests.
    \textit{Machine Learning, 45}: 5--32. 

[5] Bremness, J.B. (2004) .
     Probabilistic Forecasts of Precipitation in Terms of Quantiles Using NWP Model Output.
    \textit{Monthly Weather Reiew. 132}: 338--347.

[6] Bremness, J.B. (2020).
     Ensemble Postprocessing Using Quantile Function Regression Based on Neural Networks and Bernstein Polynomials.
    \textit{Monthly Weather Reiew. 148}: 403--414.

[7] Brunsdon C, Fotheringham A.S. and Charlton, M (1998).
    Geographically Weighted Regresson-Modeling Spatial Non-stationarity.
   \textit{The Statistician 47}: 431--443.

[8] Cleveland, W., Develin, S. (1988).
   Locally Weighted Regression: An Approach to Regression Analysis by Local Fitting. 
  \textit{Journal of the American Statistical Association. 83}: 590--610.

[9] Delle Monache, L., T. Eckel, et al., (2013).
     Probabilistic Weather Prediction with An Analog Ensemble.
    \textit{Monthly Weather Reiew. 141}: 3498--3516.

[10] Fan, R.-E., P.-H. Chen, and C.-J. Lin. (2005).
     Working Set Selection Using Second Order Information for Training Support Vector Machines.
    \textit{Journal of Machine Learning Research, Vol 6}: 1889--1918.

[11] Foody, G. M, (2003).
    Geographical Weighting as A Further Refinement to Regression Modelling: An Example Focused on the NDVI–Precipitation Relationship.
  \textit{Remote Sensing of Environment, 88}: 283--293.

[12] Gneiting, T. A.E. Raftery, et al. (2005).
     Calibrated Probabilistic Forecasting Using Ensemble Model Output Statistics and Minimum CRPS Estimation.
    \textit{Monthly Weather Reiew. 133}: 1098--1118.

[13] Hemri, S. (2018).
     Applications of Porstprocessing for Hydrological Forecasts. 
    \textit{Statistical Postprocessing of Ensemble Forecasts. chapter 8}: edited by S. Vannitsem, D.S. Wilks, J.W. Messner, Elsevier. 

[14] Hwang, J.,P. Orenstein., J. Cohen, K. Pfeiffer, L. Mackey. (2019).
     Improving Subseasonal Forecasting in the Western U.S. with Machine Learning.
    \textit{ACM, KDD2019}.

[15] Huang, P. S., H. Avron, T. N. Sainath, V. Sindhwani, and B. Ramabhadran. (2014).
     Kernel methods match Deep Neural Networks on TIMIT.
    \textit{IEEE International Conference on Acoustics, Speech and Signal Processing.}: 205--209.

[16] Japan Meteorological Agency.
    Ocean Data Assimilation System (MOVE, MRI.COM-G2).
  \textit{Tokyo Climate Center WMO Regional Climate Center in RA II.}

[17] Kecman V., T. -M. Huang, and M. Vogt. (2005).
     Iterative Single Data Algorithm for Training Kernel Machines from Huge Data Sets: Theory and Performance.
    \textit{In Support Vector Machines: Theory and Applications.}: Edited by Lipo Wang, 255-–274, Springer-Verlag.

[18] Kobayashi, S. et al. (2015).
    The JRA-55 reanalysis: General Specifications and Basic Characteristics.
  \textit{Journal of Meteorological Society Japan, 93}.

[19] Lang, M.N., G.J. Mayr, et al., (2019)
     Bivariate Gaussian models for Wind Vectors in a Distributional Regression Framework.
    \textit{Advances in Statistical Climatology, Meteorology and Oceanography. 5}: 115--132.

[20] Le, Q., T. Sarlós, and A. Smola. (2013).
     Fastfood — Approximating Kernel Expansions in Loglinear Time.
    \textit{Proceedings of the 30th International Conference on Machine Learning, Vol. 28, No. 3}: 244--252.

[21] Messner, J.W., G.J. Mayr, et al., (2017).
     Nonhomogeneous Boosting for Predictor Selection in Ensemble Postprocessing.
    \textit{Monthly Weather Reiew. 145}: 137--147.

[22] Ministry of Land Infrastructure. (2019).
   Transport and Tourism : Basic Policy Strengthening Flood Control Function of Existing Dam.

[23] Moller, A. L. Spazzini, et al., (2018).
     Vine Copula Based Post-processing of Ensemble Forecasts for Temperature.
    \textit{arXiv:1811.02255}.

[24] Nicolai Meinshausen. (2006).
     Quantile Regression Forests.
    \textit{Journal of Machine Learning Research, 7}: 983--999. 

[25] O'Carroll, A.G., E.M. Armstrong, H.M. Beggs, et al. (2019).
    Observational Needs of Sea Surface Temperature.
   \textit{Frontiers in Marine Science Vol.6, Article 420}: 1--27.

[26] Pinson, P., J.W., Messner (2018).
     Application of Postprocessing for Renewable Energy. 
    \textit{Statistical Postprocessing of Ensemble Forecasts. chapter 9}: edited by S. Vannitsem, D.S. Wilks, J.W. Messner, Elsevier. 

[27] Propastin, P., M. Kappas. (2008).
   Reducing Uncertainty in Modelling NDVI Precipitation Relationship: A
Comparative Study Using Global and Local Regression Techniques.
  \textit{GIScience and Remote Sensing, 45}: 1--25.

[28] Raftery, A.E,~Gneiting, T. et al. (2005).
     Using Bayesian Model Averaging to Calibrate Forecast Enembles.
    \textit{Monthly Weather Reiew. 133}: 1155--1174.

[29] Rahimi, A., and B. Recht. (2008).
     Random Features for Large-Scale Kernel Machines.
    \textit{Advances in Neural Information Processing Systems Vol.20}: 1177--1184.

[30] Schlosser, L., T. Hothorn, et al., (2019).
     Distributional Regression Forests for Probabilistic Precipitation Forecasting in Complex Terrain.
    \textit{Annals of Applied Statistics 13}: 1564--1589.

[31] Scholkopf, B., A.J. Smola, Williamson, and P.L. Bartlett. (2000).
     New Support Vector Algorithms.
    \textit{Neural Computation}: 1207-1245.

[32] Scholkopf, B.,~A.J.~Smola. (2002).
     Regression Estimation.
    \textit{Learning with Kernels:~Support Vector Machines,~Regularization,~Optimization, and Beyond., 9}: 251--277.

[33]  Taillardat, M.,~O. Mestre, et al.~(2016).
     Calibrated Ensemble Forecasts Using Quantile Regression Forests and Ensemble Model Output Statistics.
    \textit{Monthly Weather Reiew. 144}: 2375--2393.

[34] Trevor Hastie, Robert Tibshirani, Jerome Friedman. (2009).
     Random Forest.
    \textit{The Elements of Statistical Learning:~Data Mining, Inference, and Prediction, 15}: 587--604, Springer-Verlag. 

[35] Tsujino, H., T. Motoi, I. Ishikawa, et al. (2010).
    Reference Manual for the Meteorological Research Institute Community Ocean Model (MRI.COM) version 3.
  \textit{Technical Reports of the Meteorological Research Institute, 59}: 1--273.

[36] van der Maaten, Laurens, and Geoffrey Hinton (2008).
     Visualizing Data using t-SNE.
    \textit{Journal of Machine Learning Research 9}: 2579--2605.

[37] van der Maaten, Laurens (2013).
     Burnes-Hut t-SNE.
    \textit{arXiv:1301.3342}.

[38] Vannitsem,S. ,J.B. Bremnes, J. Demaeyer, G.R. Evans, J. Flowerdew, et al.(2020).
     Statistical Postprocessing for Weather Forecasts -Review, Challenges and Avenues in a Big Data World.
    \textit{arXiv2004.06582.}

[39] Van Schaeybroeck, B. and S. Vannitsem (2015).
     Ensemble Post-processing Using Member-by-member Approaches: Theoretical Aspects.
    \textit{Qurterly Journal of Royal Meteorological Society. 141}: 807--818.

[40] Wilks, D.S. (2018).
     Univariate Ensemble Postprocessing. 
    \textit{Statistical Postprocessing of Ensemble Forecasts. chapter 3}: edited by S. Vannitsem, D.S. Wilks, J.W. Messner, Elsevier. 

\clearpage

\section{Supplementary Materials}

\subsection{Proof of Proposition}
As we proposed in the subsection 2.1, the algorithm 1 represents the flood batch-term $\ell_2$-normalized ensemble that consists each outputs of base learners. Below is a supplementary proof of proposition that $\ell_2$-normalized ensemble has better skill, that is higher accuracy, rather than any individual base learner.
\subsection{Better Skill Proposition of $\ell_2$-normalized Ensemble Algorithm1} 
Now we represent $\textbf{y}=(y_1,...,y_N)$ as a target vector of actual values with the number of samples $N$. On the other hand, $\hat{ \textbf{y} }^{mdl}=(\hat{y}_1^{mdl},...,\hat{y}_N^{mdl})$ means an individual prediction outputs using a base trained models $mdl \in \{1,...,M\}$. 
A $\ell_2$-normalized ensemble forecast is represented as a summation of base model outputs.
\begin{equation}
\hat{\textbf{y}}^{\ell_2{-normalized Forecast}} = 
\sum_{mdl=1}^{M} a_{mdl} \frac{\hat{\textbf{y}}^{mdl}}{||\hat{\textbf{y}}^{mdl}||_2/N} \label{eq-1} 
\end{equation}
Next equation(\ref{eq-2}) defines the accuracy of mean square error $MSE(\hat{\textbf{y}},\textbf{y})$, and equation(\ref{eq-3}) defines cosine similarity $sim(\hat{\textbf{y}},\textbf{y})$, called "skill" in meteorological and hydrological science. 
Note that $\inp{ \hat{ \textbf{y} } } { \textbf{y} } $ is the inner product.
\begin{equation}
MSE(\hat{\textbf{y}},\textbf{y}) = \frac{1}{N}\sum_{t=1}^{N} ( \hat{y}_t - y_t )^2 \label{eq-2}
\end{equation}
\begin{equation}
sim(\hat{\textbf{y}}, \textbf{y}) = 
\frac{ \inp{ \hat{ \textbf{y} } } {\textbf{y} } }
{ || \hat{ \textbf{y} } ||_2 || \textbf{y} ||_2 } \label{eq-3}
\end{equation}
We are able to derive a relationship between equation(\ref{eq-2}) and equation(\ref{eq-3}), because $N, || \hat{\textbf{y}} ||_2, || \textbf{y} ||_2$ are positive.
\begin{equation}
\frac{MSE(\hat{\textbf{y}},\textbf{y})} { || \hat{ \textbf{y} } ||_2 || \textbf{y} ||_2 } =
\frac{|| \hat{ \textbf{y} } ||_2} {N || \textbf{y} ||_2} - \frac{2}{N}sim(\hat{\textbf{y}}, \textbf{y})
+ \frac{|| \textbf{y} ||_2} {N || \hat{\textbf{y}} ||_2} \label{eq-4}
\end{equation}
Thus, the accuracy of mean square error $MSE(\hat{\textbf{y}},\textbf{y})$ is in inverese proportion to the skill $sim(\hat{\textbf{y}},\textbf{y})$. 

\begin{prop}
The skill of $\hat{\textbf{y}}^{\ell_2{-normalized Forecast}}$ is strictly better than the weighted skill of base model's outputs $\hat{\textbf{y}}^{mdl}(mdl=1,...,M)$.
For any vector of weights $\textbf{a}=\{a_{1},...,a_M \} \in \textbf{R}^M$ with $\sum_{mdl=1}^M a_{mdl}=1$, and $a_{mdl} \geq 0$. 
Let the $N$ multipied $\hat{\textbf{y}}^{\ell_2{-normalized Forecast}}$ in equation(\ref{eq-1}) to evaluate the skill of base outputs as next measure 
\begin{equation}
\bar{\textbf{y}}_{(\textbf{a})} \equiv \sum_{mdl=1}^{M} a_{mdl} \frac{\hat{\textbf{y}}^{mdl}}{||\hat{\textbf{y}}^{mdl}||_2} \label{eq-5} 
\end{equation}
Then a next better skill of inequality holds strictly whenever it holds the non-zero condition that $\sum_{mdl=1}^{M} a_{mdl} \cdot sim(\hat{\textbf{y}}^{mdl}, \textbf{y})\neq 0$. 
\begin{equation}
\sum_{mdl=1}^{M} a_{mdl} \cdot sim(\hat{\textbf{y}}^{mdl}, \textbf{y}) \leq 
sim(\bar{\textbf{y}}_{(\textbf{a})}, \textbf{y}), \label{eq-6} 
\end{equation}
Hence, whenever the $\ell_2$-normilized weights of individual base model's skill is positive, the skill of $\bar{\textbf{y}}_{(\textbf{a})}$ is strictly greater than the weighted average of the individual base learenr's skills.
\end{prop}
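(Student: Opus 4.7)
My plan is to reduce the claimed skill inequality to the standard fact that a convex combination of unit vectors has $\ell_2$-norm at most one. First I would unfold the cosine similarity definition~(\ref{eq-3}) on both sides of~(\ref{eq-6}). Writing $\mathbf{u}^{mdl} \equiv \hat{\mathbf{y}}^{mdl}/\|\hat{\mathbf{y}}^{mdl}\|_2$, the weighted sum of base skills collapses to $\inp{\bar{\mathbf{y}}_{(\mathbf{a})}}{\mathbf{y}}/\|\mathbf{y}\|_2$, because $\sum_{mdl} a_{mdl}\mathbf{u}^{mdl}$ is exactly $\bar{\mathbf{y}}_{(\mathbf{a})}$ from~(\ref{eq-5}); the ensemble skill on the right of~(\ref{eq-6}) is the same numerator divided additionally by $\|\bar{\mathbf{y}}_{(\mathbf{a})}\|_2$. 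Thus the whole inequality reduces to a one-line comparison: is $\|\bar{\mathbf{y}}_{(\mathbf{a})}\|_2 \leq 1$?

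The second step answers yes, by the triangle inequality applied to the convex combination of unit vectors: since $\sum_{mdl} a_{mdl}=1$ and $a_{mdl}\geq 0$, one has $\|\bar{\mathbf{y}}_{(\mathbf{a})}\|_2 \leq \sum_{mdl} a_{mdl}\|\mathbf{u}^{mdl}\|_2 = 1$. This immediately yields the non-strict form of~(\ref{eq-6}) provided the common numerator $\inp{\bar{\mathbf{y}}_{(\mathbf{a})}}{\mathbf{y}}$ is positive, which is the operative reading of the stated ``non-zero'' hypothesis, since a negative common numerator would flip the direction of the inequality when one divides by the smaller denominator $\|\bar{\mathbf{y}}_{(\mathbf{a})}\|_2$.

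For strictness I would invoke the equality case of the triangle inequality: $\|\sum_{mdl} a_{mdl}\mathbf{u}^{mdl}\|_2 = 1$ holds only when all unit vectors $\mathbf{u}^{mdl}$ carrying positive weight are colinear, i.e.\ the base model outputs are mutual positive scalar multiples. Since the three heterogeneous base regressors (Kernel Reg, RFoob, SVM poly) generically do not produce exactly proportional prediction vectors on a flood term, the triangle inequality is strict and hence so is~(\ref{eq-6}).

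The hardest point is not the main bound but the strictness and sign bookkeeping: the ``non-zero'' phrasing in the statement really conflates two distinct issues---positivity of the weighted skill sum (needed so that dividing by $\|\bar{\mathbf{y}}_{(\mathbf{a})}\|_2 \leq 1$ strengthens rather than reverses the inequality), and non-colinearity of the base predictions (needed to upgrade $\leq$ to $<$ in the triangle inequality). I would make both conditions explicit in the write-up rather than rely on the ambiguous ``non-zero'' hypothesis, and note in passing that through the MSE--skill identity~(\ref{eq-4}) the improvement in cosine similarity translates into an improvement in mean square error as well, which is the accuracy gain advertised in the claim.
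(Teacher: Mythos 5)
Your proposal follows essentially the same route as the paper's own proof: by linearity of the inner product the weighted skill sum equals $sim(\bar{\textbf{y}}_{(\textbf{a})},\textbf{y})\,\|\bar{\textbf{y}}_{(\textbf{a})}\|_2$, and the triangle inequality on the convex combination of unit vectors gives $\|\bar{\textbf{y}}_{(\textbf{a})}\|_2\leq 1$ (the paper labels this step ``Jensen's inequality''). Your additional bookkeeping --- requiring the common numerator to be positive so that dividing by the smaller denominator does not reverse the inequality, and tying strictness to non-colinearity of the base predictions rather than to the vague ``non-zero'' hypothesis --- is correct and in fact tightens two points the paper leaves implicit.
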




\begin{proof}
\begin{equation}
\sum_{mdl=1}^{M} a_{mdl} \cdot sim(\hat{\textbf{y}}^{mdl}, \textbf{y}) =
\sum_{mdl=1}^{M} a_{mdl} \inp{ \frac{\hat{\textbf{y}}^{mdl}}{||{\textbf{y}}^{mdl}||_2} }
{ \frac{{\textbf{y}}}{||\hat{\textbf{y}}||_2} }  \nonumber
\end{equation}
\begin{equation}
= \inp{ \sum_{mdl=1}^{M} a_{mdl} \frac{{\textbf{y}}^{mdl}}{||{\textbf{y}}^{mdl}||_2} }
{ \frac{{\textbf{y}}}{||{\textbf{y}}||_2} } 
= \inp{ \bar{\textbf{y}}_{(\textbf{a})} } { \frac{{\textbf{y}}}{||{\textbf{y}}||_2} }  
 \nonumber
\end{equation}
\begin{equation}
= \inp{ \frac{ \bar{\textbf{y}}_{(\textbf{a})} }{ ||\bar{\textbf{y}}_{(\textbf{a})}||_2 } } { \frac{{\textbf{y}}}{||{\textbf{y}}||_2} }
|| \bar{\textbf{y}}_{(\textbf{a})} ||_2
= sim(\bar{\textbf{y}}_{(\textbf{a})}, \textbf{y}) || \bar{\textbf{y}}_{(\textbf{a})} ||_2 
\label{eq-8}
\end{equation}
Scince the base models forecasts are distinct, Jensen's inequality now yields the magnitude claim as 
\begin{equation}
|| \bar{\textbf{y}}_{(\textbf{a})} ||_2 = 
\sum_{mdl=1}^{M} a_{mdl} \frac{\hat{\textbf{y}}^{mdl}}{||\hat{\textbf{y}}^{mdl}||_2}  \nonumber
\end{equation}
\begin{equation}
\leq
\sum_{mdl=1}^{M} a_{mdl} \frac{||\hat{\textbf{y}}^{mdl}||_2}{||\hat{\textbf{y}}^{mdl}||_2} = 1
 \label{eq-9}
\end{equation}
We derived next inequality from equations (\ref{eq-8})(\ref{eq-9})
with strict inequality when $\sum_{mdl=1}^{M} a_{mdl} \cdot sim(\hat{\textbf{y}}^{mdl}. \textbf{y})\neq 0$.
\begin{equation}
\sum_{mdl=1}^{M} a_{mdl} \cdot sim(\hat{\textbf{y}}^{mdl}, \textbf{y}) \leq 
sim(\bar{\textbf{y}}_{(\textbf{a})}, \textbf{y}) \label{eq-10}
\end{equation}
This is the core of proposition that states the right hand side of equation(\ref{eq-10}), that is $\ell_2$-norm ensemble forecast has greater skill of cosine similarity rather than the left side that is a weighted summation of individual base learners skills.
\end{proof}

\subsection{Ocean Feature Plot}

\subsubsection{Sea Surface Temperature t-SNE Feature Extracted from Pre-tarined Nets} 
As we proposed in the subsection 2.2, the algorithm 2 represents how to extract sea surface temperature weights from the analysed images. Below is supplementary experiments that demensionality reduction by t-SNE extracted from another pre-tarined networks.

\begin{figure}[h]
\begin{center}
\includegraphics [width = 60mm] {./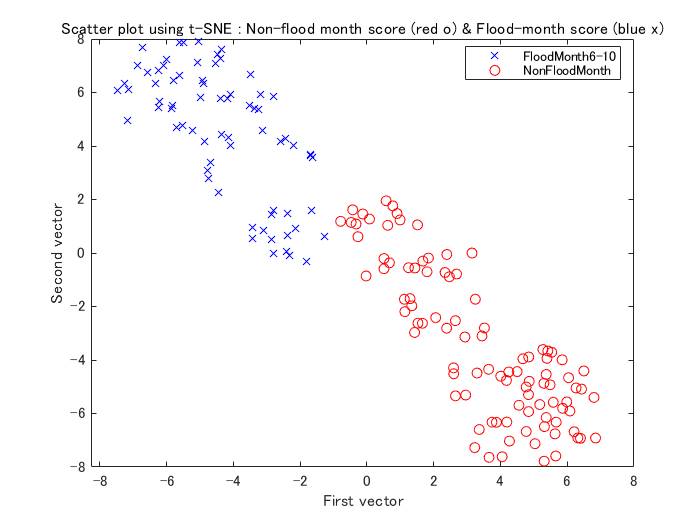}
\end{center}
\caption{Plot {t-SNE} Using ResNet101 from the 344th {pool5} layer with 2,048 elements}\label{fig-3a}
\begin{center}
\includegraphics [width = 60mm] {./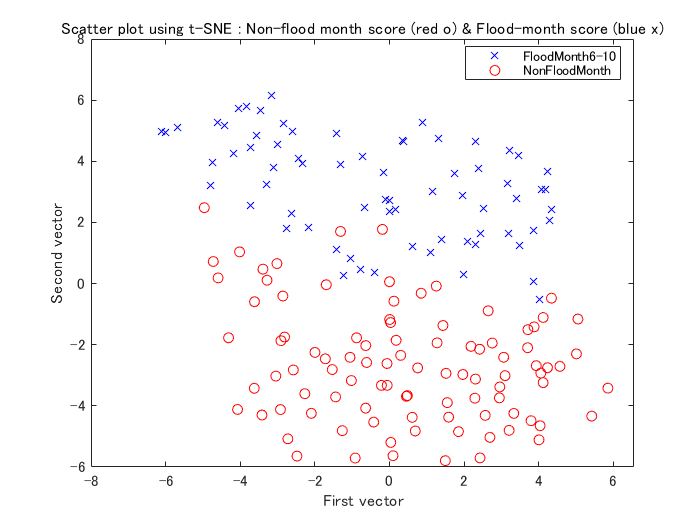}
\end{center}
\caption{Plot {t-SNE} via InceptionResNet-v2 from the 821st {avg-pool} layer with 1,536.}\label{fig-3b}
\end{figure}


\subsection{Predictors List and the Importance}
As we shown in the subsection 3.7, ocean weighted regression are applied and dipicted the whole predictor importance compared between never weighted case and ocean weighted case in Figure 10. Below is the explanation in detail subdivided into three group such as hydrological, meteorological, and rainfall.

\subsubsection{Predictor Importance Random Forest Minimising Out-of-bag Error}
\paragraph{Predictor Dataset and Statistical Transformations} 
As we implemented in the section 3, $\ell_2$-normalized ensemble forecast and sea ocean weights are applied and the results are shown in the numerical experiments.
As shown in Table~\ref{table-M1}, Table~\ref{table-M2}, Table~\ref{table-M3}, 
we show the predictor data list in detail, i.e. the hydrological, meteorological, and rainfall respectively.

\begin{figure}[h]
\begin{center}
\includegraphics [width = 65mm] {./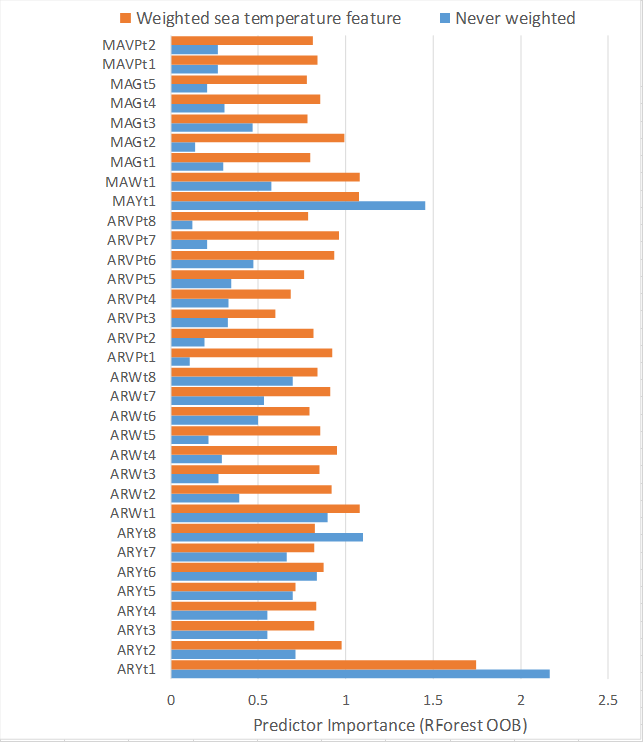}
\end{center}
\caption{Predictor Importance of Hydrological Data and Auto-regressive and Moving Averaged Data}\label{fig-9}
\end{figure}

\begin{table}[h]
\caption{Set of Hydrological Data and ARMA Transformed Predictors} \label{table-M1}
\begin{center}
\begin{tabular}{lp{120mm} lp{120mm}}
\textbf{NAME}  &\textbf{DESCRIPTION} \\
\hline
ARY  &$AR(p)$-$Y_t$: Auto-regressive (AR) transformed dam inflow with $p$-order, $(Y_{t-j}, j=1,...,p)$ \\
ARW  &$AR(p)$-$W_t$: AR transformed river water height with $p$-order, $(W_{t-j}, j=1,...,p)$ \\
ARVP  &$AR(p)$-$VP_t$: AR transformed vaporous pressure with $p$-order, $(VP_{t-j}, j=1,...,p)$ \\
MAY  &$MA(p)$-$Y_t$: Moving-average (MA) transformed dam inflow with $p$-order, $MA(p)$-$Y_t$ $=$ $\sum_{j=1}^p Y_{t-j}/p$ \\
MAW  &$MA(p)$-$W_t$: MA transformed river water height with $p$-order, $MA(p)$-$W_t$ $=$ $\sum_{j=1}^p W_{t-j}/p$ \\
MAG  &$MA(p)$-$G^r_t$: MA transformed ground observed rainfall at multiple basin positions $(r=1,...,5)$ with $p$-order, $MA(p)$-$G^r_t$ $=$ $\sum_{j=1}^p G^r_{t-j}/p$ \\
MAVP &$MA(p)$-$VP^m_t$: MA transformed vaporous pressure with $p$-order, $MA(p)$-$VP^m_t$ $=$ $\sum_{j=1}^p VP^m_{t-j}/p$ $(m=1,2)$ \\
\hline
\end{tabular}
\end{center}
\end{table}

\begin{figure}[h]
\begin{center}
\includegraphics [width = 70mm] {./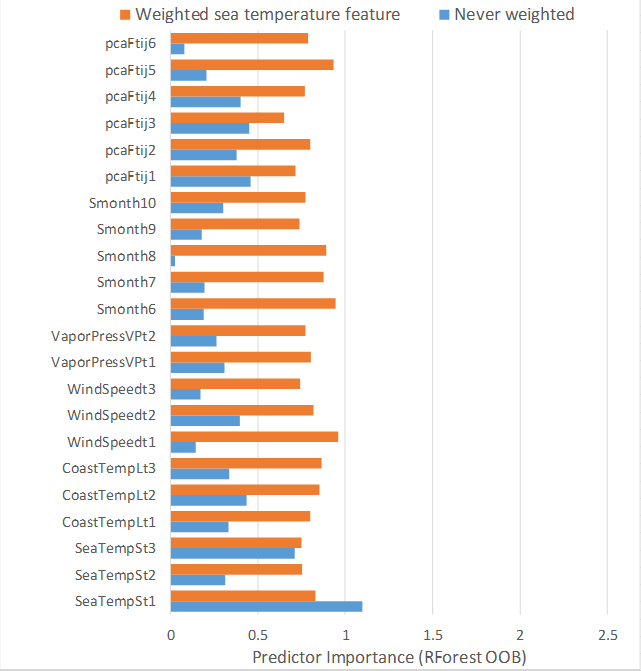}
\end{center}
\caption{Predictor Importance Comparison of Meteorological Data}\label{fig-8a}
\end{figure}

\begin{table}[h]
\caption{Set of Meteorological Data Predictors} \label{table-M2}
\begin{center}
\begin{tabular}{lp{120mm} lp{120mm}}
\textbf{NAME}  &\textbf{DESCRIPTION} \\
\hline 
SeaTemp    &$S^d_t$: Sea temperature features approximated to {3-dimension $(d=1,2,3)$ t-SNE outputs} from VGG16 of 33th fc6 layer \\
CoastTemp    &$L^k_t$: Stations$(k=1,2,3)$ observed surface temperature at coast and islands(Odawara, Oshima-kitanoyama, Kozushima) \\
WindSpeed   &$WS^k_t$: Stations observed wind speed at coast and islands, like $L^k_t$ \\
VaporPress  &$VP^m_t$: Stations$(m=1,2)$ observed vaporous pressure at mountain and coast(Mt.Fuji, Ajiro)  $(m=1,2)$ \\
Smonth &$M^s_t$: Seasonal dummy variable $\{0,1\}$, if $t$ is s-th month, then set 1, otherwise set 0, $(s=6,7,8,9,10)$ \\
pcaF  &{$PCA$-$F^{h}_t$}: Principal component analyzed to 90 percent variance of the rainfall forecast guidance with short term among {$h$-lead time $(h=1,2,3,4,5,6)$} at the dam basin \\
\hline
\end{tabular}
\end{center}
\end{table}

\begin{figure}[h]
\begin{center}
\includegraphics [width = 70mm] {./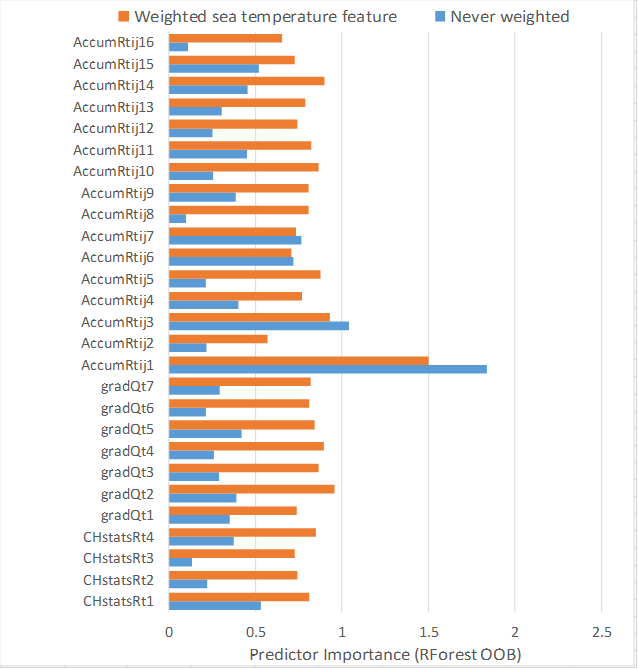}
\end{center}
\caption{Predictor Importance Comparison of Meteorological Data}\label{fig-8b}
\end{figure}


\begin{table}[h]
\caption{Set of Rainfall Data Predictors} \label{table-M3}
\begin{center}
\begin{tabular}{lp{120mm} lp{120mm}}
\textbf{NAME}  &\textbf{DESCRIPTION} \\
\hline 
CHstatsR &$CH(u)$-$R^{ij}_t$: $u$th moment statistics $(u=1,2,3,4)$ of analyzed rainfall  at $t$ on each 400-grids $(i,j)$ with range of 136X148 km, mean, standard deviation, skewness, and kurtosis \\
gradQ &$Q^q_t$: Gradient value of Trapezoid computed past 12-h with respect to three primary variables such as dam inflow, river water height, and ground rainfall observation at five positions $(q=1,...,7)$ \\
AccumR &$PCA^c(q)$-$R^{ij}_t$: Accumulated rainfall with range of past 6-h, using the principal components ($c=1,2,...,16$) with variance more than 85 percent$(q=0.85)$ from 400-grids \\
\hline
\end{tabular}
\end{center}
\end{table}

\vfill

\end{document}